\newtheorem{theorem}{Theorem}
\newcommand{\uq}{\ac{UQ}\xspace}
\newcommand{\cp}{\ac{CP}\xspace}
\newcommand{\crc}{\ac{CRC}\xspace}
\newcommand{\ml}{\ac{ML}\xspace}
\newcommand{\Yhat}{\widehat{Y}\xspace}
\newcommand{\C}{\mathcal{C}}
\newcommand{\lbd}{$\lambda$\xspace}
\newcommand{\Clb}{\mathcal{C}_{\lambda}}
\newcommand{\dt}{\delta}
\newcommand{\lb}{\lambda}
\newcommand{\lbhat}{\hat{\lambda}}
\newcommand{\Lb}{\Lambda}
\newcommand{\fhat}{\hat{f}}
\newcommand{\Xtest}{X_{\text{test}}}
\newcommand{\Ytest}{Y_{\text{test}}}
\newcommand{\f}{\hat{f}}
\newcommand{\XY}{(X_i,Y_i)}
\definecolor{se_col}{HTML}{8ec3f1}
\definecolor{falseNegColor}{RGB}{216,27,96}
\newcommand{\seCross}[1][0.25]{%
  \begin{tikzpicture}[scale=#1, baseline=0.5ex]
    \fill[se_col] (1,0) rectangle (2,3); 
    \fill[se_col] (0,1) rectangle (3,2); 
    
    \draw[line width=0.5pt, color=darkgray] (0,0) grid (3,3);
    
    \fill (1.5,1.5) circle (0.20);
  \end{tikzpicture}%
}
\newcommand{\seSquare}[1][0.25]{%
  \begin{tikzpicture}[scale=#1, baseline=0.5ex]
    \fill[se_col] (0,0) rectangle (3,3);
    \draw[line width=0.5pt, color=darkgray] (0,0) grid (3,3);
    \fill (1.5,1.5) circle (0.20);
  \end{tikzpicture}%
}
\definecolor{predColor}{RGB}{30, 136, 229}
\definecolor{coveredColor}{RGB}{254,97,0}
\newcommand{\coveredPixels}[1][0.5]{%
  \begin{tikzpicture}[scale=#1, baseline=0.4ex]
    \fill[coveredColor] (0,0) rectangle (2,2);
    \draw[line width=0.5pt, color=darkgray] (0,0) grid (2,2);
  \end{tikzpicture}%
}
\definecolor{truePosColor}{RGB}{120, 94, 240}
\newcommand{\truePos}[1][0.5]{%
  \begin{tikzpicture}[scale=#1, baseline=0.4ex]
    \fill[fill=truePosColor] (0,0) rectangle (2,2);
    \draw[line width=0.5pt, color=darkgray] (0,0) grid (2,2);
  \end{tikzpicture}%
}
\definecolor{marginColor}{RGB}{143, 195, 242}
\newcommand{\marginPixels}[1][0.5]{%
  \begin{tikzpicture}[scale=#1, baseline=0.4ex]
    \fill[fill=marginColor] (0,0) rectangle (2,2);
    \draw[line width=0.5pt, color=darkgray] (0,0) grid (2,2);
  \end{tikzpicture}%
}
\newcommand{\repo}{\url{https://github.com/deel-ai-papers/consema}\xspace}
\begin{document}
\title{Conformal Prediction for Image Segmentation Using Morphological Prediction Sets}
\date{}

\author{
    Luca Mossina \quad Corentin Friedrich
    \vspace{0.5em}
    \\
        IRT Saint Exupéry\\
        Toulouse, France\\
        {\small \texttt{firstname.lastname@irt-saintexupery.com}}
}

\maketitle        
\begin{abstract}
Image segmentation is a challenging task influenced by multiple sources of uncertainty, such as the data labeling process or the sampling of training data.
In this paper we focus on binary segmentation and address these challenges using conformal prediction, a family of model- and data-agnostic methods for uncertainty quantification that provide finite-sample theoretical guarantees and applicable to any pretrained predictor.
Our approach involves computing nonconformity scores, a type of prediction residual, on held-out calibration data not used during training.
We use dilation, one of the fundamental operations in mathematical morphology, to construct a margin added to the borders of predicted segmentation masks.
At inference, the predicted set formed by the mask and its margin 
contains the ground-truth mask with high probability, at a confidence level specified by the user.
The size of the margin serves as an indicator of predictive uncertainty for a given model and dataset.
We work in a regime of minimal information as we do not require any feedback from the predictor: 
only the predicted masks are needed for computing the prediction sets.
Hence, our method is applicable to any segmentation model, including those based on deep learning;
we evaluate our approach on several medical imaging applications.\footnote{
Our code is available at \repo.
} 

~\\
\textbf{Keywords:} Image Segmentation; Conformal Prediction; Uncertainty Quantification.
\end{abstract}

\section{Introduction}
\label{sec:intro}

\uq is essential for ensuring the reliability of \ml models in critical fields like medical imaging \cite{Lambert_2024_trustworthy}. 
In image segmentation, uncertainties can stem from various sources, including data labeling and sampling.
If such predictions are part of a complex system, such as an automated aid in medical diagnostics, one needs to rigorously quantify the prediction errors. 
We use \cp \cite{Vovk_2005_algorithmic,Gupta_2022_nested}, a framework that provides model- and data-agnostic methods for \uq with finite-sample theoretical guarantees, applicable to any pretrained predictor.
It constructs prediction sets that contain the truth at a confidence level defined by the user,
using held-out 
i.i.d.\footnote{
    \cp also applies to \textit{exchangeable} data, which is a less strict requirement.
} 
calibration data from the same distribution as production data.
We focus on binary segmentation, where each pixel is classified as either belonging to the object (e.g., a tumor) or the background.

\textbf{Our contribution}.
We propose a novel approach to \cp for image segmentation,
working with a minimal set of hypotheses: only the binary prediction masks ($\Yhat$, Fig.~\ref{fig:wbc_nc_score_pred}) are needed. 
Unlike existing methods, we do not require access to the predictor $\f(\cdot)$ nor its feedback (e.g., logits);
thus, our method is applicable to black-box predictors, e.g., embedded in third-party software and machines or derived from complex foundation models \cite{Ma_2024_Med_SAM}.

We build \cp sets as margins to be added on the contours of masks  (Fig.~\ref{fig:wbc_nc_score_dil}) using morphological dilation.
The size of these margins depends on the nonconformity scores (Eq.~\ref{eq:score-smallest-margin}) measured on held-out calibration data.
This method can be used to validate a model (knowing the typical error we incur into, on production data), but also to provide a set of pixels (the ``conformal margin'') that are likely to contain the part of ground truth we might have missed.
Although we focus on medical imaging, it is applicable to any use case and any segmentation model.

\begin{figure} [t]
    \centering
    \begin{subfigure}[b]{0.36\textwidth}
        \centering
        \includegraphics[width=0.5\textwidth]{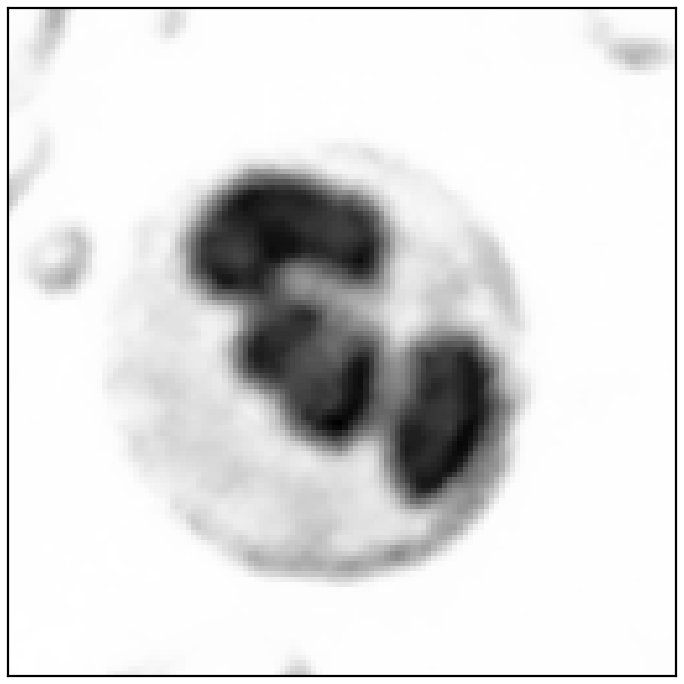}
        \caption{Input image $X$}
        \label{fig:wbc_nc_score_input_img}
    \end{subfigure}
    \begin{subfigure}[b]{0.36\textwidth}
        \centering
        \includegraphics[width=0.6\textwidth]{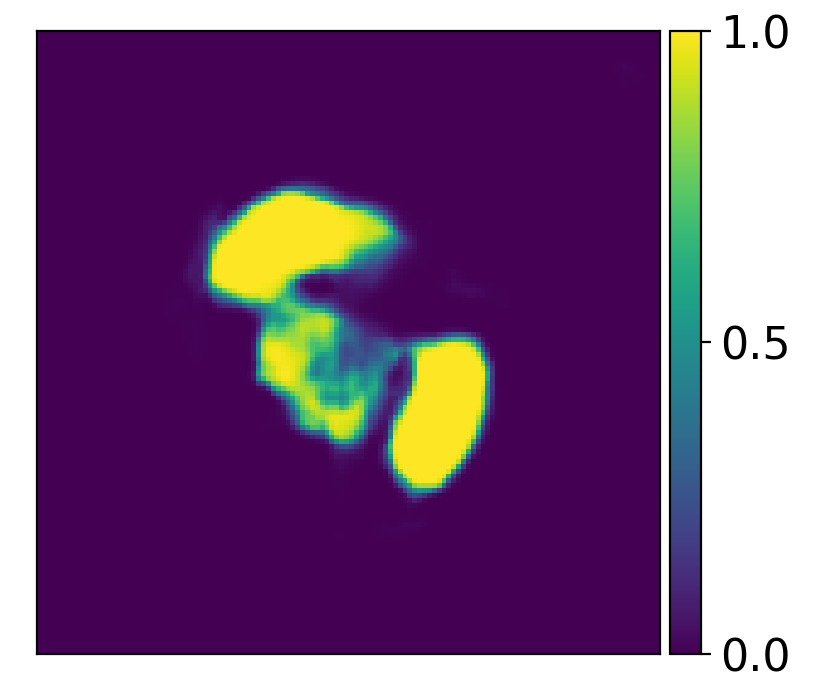}
        \caption{Sigmoid scores (unknown)}
        \label{fig:wbc_nc_score_soft_pred}
    \end{subfigure}

    \vspace{0.25cm} %
    
    \begin{subfigure}[b]{0.24\textwidth}
        \centering
        \includegraphics[width=0.94\textwidth]{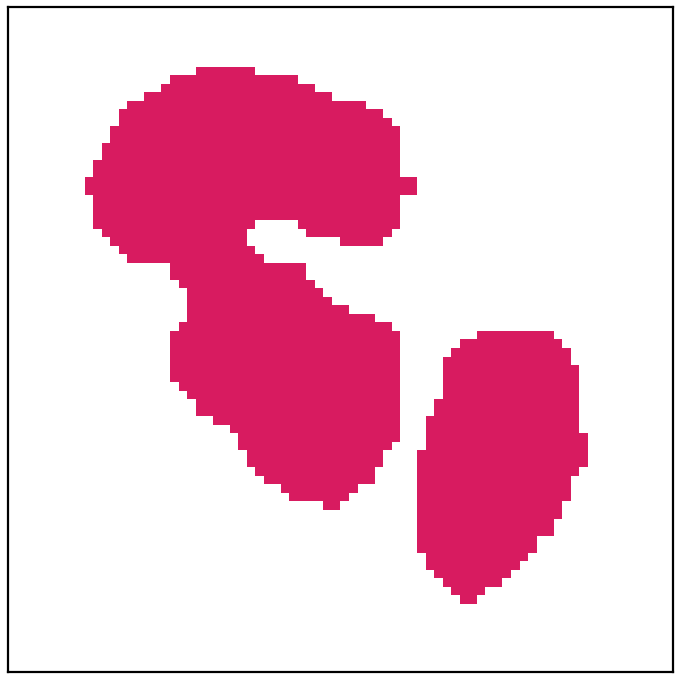}
        \caption{Target $Y$ \textcolor{white}{$\Yhat$}}
        \label{fig:wbc_nc_score_gt}
    \end{subfigure}
    \begin{subfigure}[b]{0.24\textwidth}
        \centering
        \includegraphics[width=0.94\textwidth]{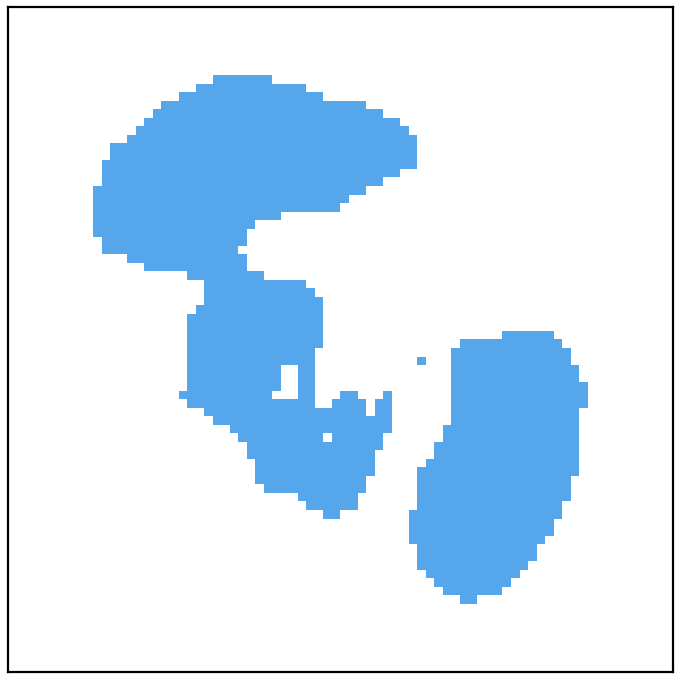}
        \caption{Prediction $\Yhat$}
        \label{fig:wbc_nc_score_pred}
    \end{subfigure}
    \begin{subfigure}[b]{0.24\textwidth}
        \centering
        \includegraphics[width=0.94\textwidth]{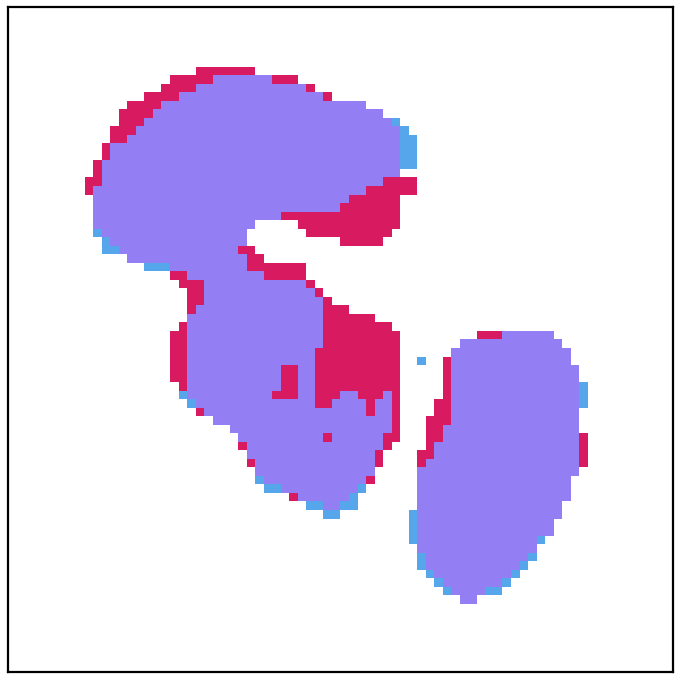}
        \caption{$Y \cap \Yhat$}
        \label{fig:wbc_nc_score_intersect}
    \end{subfigure}
    \begin{subfigure}[b]{0.24\textwidth}
        \centering
        \includegraphics[width=0.94\textwidth]{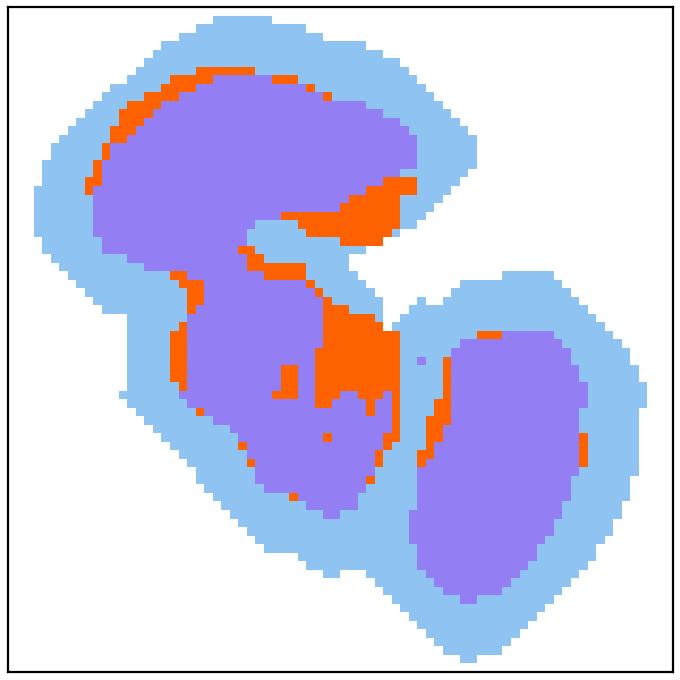}
        \caption{$\C_{6}(X) = \dt_{B}^{6}(\Yhat)$}
        \label{fig:wbc_nc_score_dil}
    \end{subfigure}

    \caption{Example: White Blood Cell (WBC) dataset \cite{Zheng_2018_WBC}, prediction (nucleus) with UniverSeg \cite{Butoi_2023_UniverSeg}. 
    \textbf{(a)} 
    Input image $X$.
    \textbf{(b)}
    Sigmoid scores $\fhat(X)$, assumed to be unavailable.
    \textbf{(c)}
    ground-truth mask $Y$.
    \textbf{(d)}
    predicted mask $\Yhat$.
    \textbf{(e)} 
    intersection of $Y$  
    and $\Yhat$ in purple (true positives).  
    \textbf{(f)}
    prediction set $\C_{\lb}(\Yhat)$: 
    adding a margin via $\lb = 6$ dilations of $\Yhat$ by structuring element $B=$ \seCross[0.12], 
    the missing pixels (e, in red) are covered,
    as per nonconformity score in Eq.~\eqref{eq:score-smallest-margin}.
    ~\\
    \textbf{Colors\,}
    \truePos[0.15]: true positives; 
    \marginPixels[0.15]: dilation margin; 
    \coveredPixels[0.15]: false negatives recovered. 
    }
    \label{fig:wbc_nc_score_intro_plot}
\end{figure}

\section{Background}
\label{sec:context}
\acf{CP} \cite{Vovk_2005_algorithmic,Angelopoulos_2024_theoretical_CP,DaVeiga_2024_tutorial_conformal}
constructs prediction sets $\C(X)$ that contain the ground truth $Y$ with probability $\mathbb{P}\{ Y \in \mathcal{C}(X) \} \geq 1 - \alpha$, where $\alpha \in (0,1)$ is a user-specified error level (also ``risk'').
We use inductive (or ``split'') \cp \cite{Papadopoulos_2002_inductive}, which computes nonconformity scores (prediction residuals) on held-out, labeled calibration data that are independent of the training data and follow the same distribution as the test data.  
The size of $\C(X)$ is often interpreted as a measure of uncertainty,\footnote{  
    \cp does not distinguish \cite{Mossina_2024_varisco} between \textit{aleatoric} and \textit{epistemic} uncertainty \cite{Huellermeier_2021_Aleatoric}.
}
as it depends on a quantile of the nonconformity scores.

\paragraph{Conformal Prediction in image segmentation.}
Using a threshold on the sigmoid scores,
\cite{Bates_2021_RCPS,Angelopoulos_2022_CRC,Blot_2024_automatically_adaptive_CRC}
construct prediction sets with distribution-free risk-controlling procedures in binary segmentation.
In \cite{Mossina_2024_varisco}, they extend the method of \cite{Angelopoulos_2022_CRC} to account for multiple classes at once, where each class channel can be seen as a binary mask.
Furthermore, \cite{Davenport_2024_conformal} builds inner and outer prediction sets that capture the ground truth with high probability, 
and they propose a nonconformity score based on the distance to the boundary of the masks.
The methods of \cite{Liu_2024_spatial_sacp,Bereska_2025_sacp} use a spatially-aware weighting of the scores, under the hypothesis of pixel-wise exchangeability.
Our work is also related to \cp for object detection \cite{deGrancey_2022_object,Li_2022_towards,andeol_2024_conformal,Andeol_2023_confident,Timans_2024_adaptive}, 
where a ``conformal'' margin is added around the bounding boxes.

\subsection{Nested prediction sets}
\label{sec:nested-sets}
Following the formulation of \cp based on {nested prediction sets} of \cite{Gupta_2022_nested}, 
let $X$ and $Y$ be the input features and target; 
let $\{ \C_{\lb}(X) \}_{\lb \in \Lb}$ be a sequence of prediction sets, where $\Lb$ is an ordered set (e.g., $\Lb \subset \mathbb{R}^+$). 
This is said to be a \textit{sequence of nested sets} when, 
for any $\lb \leq \lb'$, we have $\C_{\lb}(X) \subseteq \C_{\lb'}(X)$.
The nonconformity score induced by $\C_{\lb}(X)$ is then
$
    r(X,Y) = \text{inf} \, \big\{ \lb \in \Lb \,:\, Y \in \C_{\lb}(X) \big\}.
$
In words, the score is the smallest parameter $\lb$ such that the prediction set built from features $X$ contains the true $Y$ 
(see \cite{Gupta_2022_nested} for more examples).
Given calibration data $\XY_{i=1}^{n}$, one can compute the empirical quantile $\lbhat$ in \eqref{eq:nested-conformal-quantile} which is then used at inference to build $\C_{\lbhat}(X_{\text{test}})$.
\begin{equation}
\label{eq:nested-conformal-quantile}
    \hat{\lb} = \lceil(n+1)(1-\alpha)\rceil\text{-th largest score in } \big(r(X_i,Y_i)\big)_{i=1}^{n}.
\end{equation}

\section{Methods}
\label{sec:conosco-method}
Let $\XY_{i=1}^{n}$ be a sequence of calibration points where 
$X_i$ is the input image, and $Y_i$ its ground-truth mask labeled by an expert.
We notate with $\Yhat$ the predicted mask and we assume that the segmentation predictor $\f$ is unknown or inaccessible, based on deep learning or any other algorithm.
Note that $Y$ and $\Yhat$ are sets of points (i.e. pixels), hence the usual set notation applies.

For binary segmentation, 
we aim to avoid false negatives in the prediction:
that is, every ground-truth pixel is contained in the predicted mask,
denoted as $Y \subseteq \widehat{Y}$.
To do so,
the core of our proposal is to add a margin $\mu(\Yhat)$ around the prediction $\Yhat$ so that we cover all true positives: the prediction set is $\Clb(X) = \Yhat \cup \mu(\Yhat)$ and the condition becomes $Y \subseteq \Clb(X)$.\footnote{
    We write $\Clb(X)$ to be consistent with the literature, although we could write $\Clb(\Yhat)$, since we do not need access to $X$ nor the underlying predictor $\f$.
}
With respect to Fig.~\ref{fig:wbc_nc_score_intro_plot}.c, 
our method statistically covers the red pixels (false negatives), whereas the blue pixels are false positives and deemed innocuous.

\subsection{Nested sets via morphological dilations}
\label{sec:morpho-nested-sets}
We use \textit{morphological dilation}, the fundamental operator of mathematical morphology \cite{Matheron_1975_random_sets,Serra_1984_math_morpho_v1,Soille_2004_Morpholgical,Schmitt_2013_morpho_math,Gonzalez_2017_digital_image_proc,Blusseau_2023_morphologie}.
Binary dilation $\dt_{B}(\cdot)$ on a discrete set (e.g., a binary digital image) is performed using a structuring element (SE) $B$, which defines pixel connectivity.
Common choices include a $ 3 \times 3 $ cross (\seCross[0.12], 4-connectivity) and a $ 3 \times 3 $ square (\seSquare[0.12], 8-connectivity) \cite{Gonzalez_2017_digital_image_proc}.
One iteration of dilation passes $B$ over the image and assigns a value of 1 to all zero-valued pixels that have at least one neighboring 1-pixel under $B$. 

Our proposal for the prediction set is to choose a structuring element $B$ and apply a dilation $\lb \in \Lb \subseteq \mathbb{N}$ times:
\begin{equation}    
\C_{\lb}(X) 
    := \underbrace{(\delta_B \circ \delta_B \circ \dots \circ \delta_B)}_{\lb \text{ iterations}}(\Yhat) 
    = \delta_B^{\lb}(\Yhat),
\label{eq:set-iter-dilations}
\end{equation}
\noindent
with 
$
\dt^{0}(\Yhat) := \Yhat$ and $\dt_B^{\lb}(\Yhat) := \dt_{B}^{\,}(\dt^{\lb-1}_B(\Yhat))
$.
Also, the set $\mu^{\lb}(\Yhat) = \Clb(X) \setminus \Yhat $ is what we call the \textit{margin} of $\Clb(X)$.
Morphological dilation is extensive, so the dilated set always increases in size and contains the original set, until the whole image is covered.
It follows that for any (nonempty) prediction mask $\Yhat$, the sequence $(\dt_B^{\lb}(\Yhat))_{\lb \in \Lb}$ forms a sequence of nested sets.
Furthermore, it would be straightforward to extend this method to negative margins, using erosions on the background if the predictions were over-covering the ground truth.

Note that any operation that preserves the nested conditions of Sec.~\ref{sec:nested-sets} is applicable, such as chaining several structuring elements to induce specific shape on the margin or having SE's of variable size.
For example, it is possible to obtain the distance-based score of \cite{Davenport_2024_conformal} doing a single dilation with a structuring element $B(\lb)$ that grows with $\lb$, so that $B(\lb) \subset B(\lb')$ for any $\lb < \lb'$.
Then, the prediction set is
$
\C_{\lb}(X_i)  = \delta^{1}_{B(\lambda)} (\Yhat_i) 
$,
where, for $\lb = 0$, we have $B(0) := \varnothing$ and $\dt_{\varnothing}(\Yhat) := \Yhat$.
This can be a discrete approximation of a disc or any other shape. 
As above, it also holds that $(\dt^{1}_{B(\lb)}(\Yhat))_{\lb \in \Lb}$ is a sequence of nested sets. 

It is important that one rely exclusively on prior knowledge or training data when selecting the morphological operation and $B$. 
Using the calibration data for this purpose would violate the i.i.d. or exchangeability assumptions required by \cp.

\subsubsection{Nonconformity score.}
For a calibration pair $\XY$,
we define the score  as the smallest value $\lb \in \Lb$ such that at least $\tau \times 100\%$ of the ground-truth pixels in $Y_i$ are contained within the prediction set $\Clb(X_i)$, where the hyperparameter $\tau \in [0,1]$ is referred to as \textit{coverage ratio} \cite{Mossina_2024_varisco}.
More formally,
\begin{equation}
    r_{}\XY = 
    \text{inf}
    \left\{
        \lb \in \mathbb{N} \,:\, 
        \frac{|Y_i \cap \C_{\lb}(X_i)|}{|Y_i|} \geq \tau 
    \right\},
\label{eq:score-smallest-margin}
\end{equation}
\noindent where $| \cdot |$ is the number of elements (pixels) in a set. 
In some rare events, demanding to cover the entire truth can be overly conservative, that's why we introduce the hyperparameter $\tau$ that enables a trade-off.
For instance, a $\tau = 0.999$ implies that the user can accept up to $0.1\%$ of false negatives in $\Clb(X)$.

Since by construction we have that for any $\lb \leq \lb'$,
$\C_{\lb}(X_i) \subseteq \C_{\lb'}(X_i)$,
this can be applied into the formulation of conformal {nested prediction sets} \cite{Gupta_2022_nested,Angelopoulos_2022_CRC} (see Sec.~\ref{sec:nested-sets}) and the following holds:

\begin{theorem} 
Let $\lbhat$ be computed as in Eq.~\eqref{eq:nested-conformal-quantile}. 
Under the hypotheses of inductive conformal prediction \cite{Papadopoulos_2002_inductive,Gupta_2022_nested,Angelopoulos_2022_CRC}, 
for the nonconformity score in \eqref{eq:score-smallest-margin} induced by 
prediction sets \eqref{eq:set-iter-dilations}, it holds true that, for a new point $(\Xtest,\Ytest)$,
    \begin{equation}
    \label{eq:cp-proba-tau}
    \mathbb{P} \left[
        \frac{|\Ytest \cap \C_{\lbhat}(\Xtest)|}{|\Ytest|} \geq \tau 
        \right]
        \geq 1 - \alpha.
    \end{equation}
\end{theorem}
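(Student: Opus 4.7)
The plan is to reduce the claim to the standard coverage guarantee for inductive conformal prediction in its nested-set formulation \cite{Gupta_2022_nested}, by verifying that (i) the sequence of sets in \eqref{eq:set-iter-dilations} is nested, and (ii) the score in \eqref{eq:score-smallest-margin} is the infimum of a nested monotone event, so that a quantile bound on scores translates into the desired coverage statement.

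First, I would check that $\{\C_{\lb}(X)\}_{\lb \in \Lb}$, with $\Lb \subseteq \mathbb{N}$, is a nested family. Since binary dilation $\dt_B$ is extensive, we have $\Yhat \subseteq \dt_B(\Yhat) \subseteq \dt_B^{2}(\Yhat) \subseteq \dots$, so $\C_{\lb}(X) \subseteq \C_{\lb'}(X)$ whenever $\lb \leq \lb'$; this is already noted in Sec.~\ref{sec:morpho-nested-sets}. Consequently, for any fixed ground truth $Y$, the map $\lb \mapsto |Y \cap \C_{\lb}(X)|/|Y|$ is non-decreasing in $\lb$ and bounded above by $1$, so it eventually meets or exceeds any threshold $\tau \in [0,1]$ (at the latest when $\C_{\lb}(X)$ covers the whole image).

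Next, I would establish the key equivalence. Define the event $E_{\lb}(X,Y) := \{|Y \cap \C_{\lb}(X)|/|Y| \geq \tau\}$. By monotonicity of the coverage ratio in $\lb$, if $E_{\lb}$ holds then $E_{\lb'}$ holds for all $\lb' \geq \lb$. Hence the score $r(X,Y) = \inf\{\lb \in \mathbb{N} : E_{\lb}(X,Y)\}$ in \eqref{eq:score-smallest-margin} satisfies, for every threshold $\lbhat \in \mathbb{N}$,
\begin{equation*}
    r(X,Y) \leq \lbhat \;\;\Longleftrightarrow\;\; E_{\lbhat}(X,Y),
\end{equation*}
i.e. $r(X,Y) \leq \lbhat$ is equivalent to $|Y \cap \C_{\lbhat}(X)|/|Y| \geq \tau$. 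This is exactly the nested-set representation in \cite{Gupta_2022_nested}, with membership $Y \in \C_{\lb}(X)$ replaced by the coverage-ratio event $E_{\lb}(X,Y)$, which behaves identically with respect to nestedness.

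Finally, I would apply the standard split/inductive conformal quantile argument. Under the i.i.d.\ (or exchangeability) assumption on $\XY_{i=1}^{n}$ and $(\Xtest,\Ytest)$, the scores $(r(X_i,Y_i))_{i=1}^{n}$ and $r(\Xtest,\Ytest)$ are exchangeable, so the classical quantile lemma gives $\mathbb{P}[r(\Xtest,\Ytest) \leq \lbhat] \geq 1-\alpha$ with $\lbhat$ as in \eqref{eq:nested-conformal-quantile}. Combining this with the equivalence above yields
\begin{equation*}
    \mathbb{P}\!\left[\frac{|\Ytest \cap \C_{\lbhat}(\Xtest)|}{|\Ytest|} \geq \tau\right] = \mathbb{P}\bigl[r(\Xtest,\Ytest) \leq \lbhat\bigr] \geq 1-\alpha,
\end{equation*}
which is \eqref{eq:cp-proba-tau}. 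The only non-routine step is the monotonicity-based equivalence $\{r \leq \lbhat\} \Leftrightarrow E_{\lbhat}$; once this is in hand the result is a direct instantiation of the nested-CP guarantee. A minor caveat worth noting is the convention for empty $Y$ (or $\lb$ at which the full image is covered), which can be handled by defining the ratio to equal $1$ in that edge case, so $E_{\lb}$ is trivially satisfied and the score is well defined on a finite grid.
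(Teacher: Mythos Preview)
Your argument is correct, but it does not follow the paper's route. The paper proves the theorem by recasting it as an instance of \crc: it sets the binary loss $\ell(\Clb(X),Y) = \mathbbm{1}\{|Y \cap \C_{\lb}(X)|/|Y| < \tau\}$, observes that this loss is monotone in $\lb$, and then invokes Theorem~1 of \cite{Angelopoulos_2022_CRC} together with the remark there that \crc with a binary loss is statistically equivalent to \cp. You instead stay entirely within the nested-set \cp formulation of \cite{Gupta_2022_nested}: you check extensivity of $\dt_B$ to get nestedness, prove the equivalence $\{r(\Xtest,\Ytest) \leq \lbhat\} \Leftrightarrow E_{\lbhat}(\Xtest,\Ytest)$ from monotonicity of the coverage ratio, and then apply the exchangeability quantile lemma directly. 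The paper's version is shorter and emphasizes that the result is a special case of a more general risk-control framework; your version is more self-contained and makes explicit the one non-trivial step (the score--event equivalence) that the paper hides inside the cited theorem. Both arrive at exactly the same inequality, and your handling of the edge case (empty $Y$, or $\lb$ reaching full-image coverage) is a reasonable addition that the paper leaves implicit.
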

\begin{proof}
For any
$\tau \in [0,1]$, 
it suffices to set a binary loss to
$\ell(\Clb(X),Y) = \mathbbm{1}\{ \frac{|Y \cap \C_{\lb}(X)|}{|Y|} \not \geq \tau  \}$ 
(monotone in \lbd)
and apply \crc as per Theorem 1 in \cite{Angelopoulos_2022_CRC},
where they show that \crc with binary losses and \cp are statistically equivalent. 
$\square$
\end{proof}
The \cp guarantee in Eq.\eqref{eq:cp-proba-tau} is said to hold \textit{marginally}, i.e., on average over all possible inputs $\Xtest$ and on average over repeated draws of the calibration and test samples;
see \cite{Angelopoulos_2024_theoretical_CP} for statistical details.
Eq.~\eqref{eq:nested-conformal-quantile} implies that,
for an \textit{a priori} fixed $\alpha$, the sample size must be $n \geq \frac{1}{\alpha} - 1$.
Similarly, 
for a fixed calibration set of size $n$, the user can choose (prior to calibration) an error value $\alpha \geq \frac{1}{n+1}$.

\subsubsection{Conformalization algorithm}
\label{sec:cp-algo}
The ``conformalization'' of a (unknown) pretrained segmentation predictor $\fhat$ boils down to:
\begin{enumerate} \itemsep0em
    \item set $\alpha \in (0,1)$ and collect labeled calibration data $\XY_{i=1}^{n}$, with $n \geq \frac{1}{\alpha} - 1$;
    \item fix a coverage ratio $ \tau \in [0,1]$ and a $B$ for prediction set $\Clb(\cdot)$ in Eq.~\eqref{eq:set-iter-dilations};
    \item compute the nonconformity scores $\left(r\XY \right)_{i=1}^n$ as per Eq.~\eqref{eq:score-smallest-margin};
    \item compute the empirical quantile $\lbhat$ as in Eq.~\eqref{eq:nested-conformal-quantile};
    \item for a test prediction $\Yhat_{\text{test}}$, use $\lbhat$ in $\Clb(\cdot)$ and compute the dilated mask. 
\end{enumerate}

\section{Experiments}
\label{sec:experiments}
We ran our experiments with two segmentation models and three dataset groups.
First, we used the pretrained UniverSeg model \cite{Butoi_2023_UniverSeg}.\footnote{
\url{https://github.com/JJGO/UniverSeg}, accessed 2024-04-08.
}
We tested it on two public datasets as also evaluated in their paper: WBC (White Blood Cells) \cite{Zheng_2018_WBC}\footnote{
    Distributed as open source at: \url{https://github.com/zxaoyou/segmentation_WBC}. 
    We acknowledge the \textit{Jiangxi Tecom Science Corporation, China}, and the \textit{CellaVision blog} (\url{http://blog.cellavision.com/}) for providing the data.
}
and OASIS \cite{Marcus_2007_OASIS_data,Hoopes_2022_learning_oasis},\footnote{
    Source: \url{https://sites.wustl.edu/oasisbrains}, obtained via \url{https://github.com/JJGO/UniverSeg}. 
}
a neuroimaging dataset.
As in \cite{Angelopoulos_2022_CRC},
we also ran experiments using the PraNet \cite{Fan_2020_pranet} model\footnote{
    We reused the precomputed predictions and dataset as partitioned (training, test) by the authors of \cite{Angelopoulos_2022_CRC}. 
    See \url{https://github.com/aangelopoulos/conformal-prediction}.
}
and  the collection of datasets it was trained on, covering polyp segmentation in colonoscopy images:  
ETIS \cite{Silva_2014_ETIS_dataset},
CVC-ClinicDB \cite{Bernal_2015_CVC_Clinic_Dataset},
CVC-ColonDB \cite{Tajbakhsh_2015_CVC_ColonDB},
EndoScene \cite{Vazquez_2017_EndoScene},
and Kvasir \cite{Pogorelov_2017_kvasir_data}.

For conformalization, we randomly shuffles and partitioned the original test data into calibration and proper test, applied the algorithm in Sec.~\ref{sec:cp-algo} and compute the following metrics:
\begin{align}
&\text{Empirical coverage: }
& \text{Cov}(\lbhat; \C, \tau) 
    & =  \frac{1}{n_{\text{test}}} \sum_{i=1}^{n_{\text{test}}} \mathbbm{1} \left\{ 
        \frac{|Y_i \cap \C_{\lbhat}(X_i)|}{|Y_i|} \geq \tau 
\right\},
        \label{eq:emp-cov}
\\
&\text{Stretch \cite{andeol_2024_conformal}: } 
    & \phi(\lbhat; \C) 
    &= \frac{1}{n_{\text{test}}} \sum_{i=1}^{n_{\text{test}}}
        \frac{|\C_{\lbhat}(X_i)|}{|\Yhat_i|}.
\label{eq:stretch}
\end{align}
\noindent
Statistical coverage being a random quantity,
the {empirical coverage} is the evaluation of a realization of Eq.~\eqref{eq:cp-proba-tau}. 
The stretch $\phi$ tells, on average, how much larger the prediction sets are with respect to $\Yhat$ (lower is better).
We also report the average empirical quantile $\lbhat$, which indicates how many dilations were necessary to attain the specified coverage.
For all the metrics, we report the average and standard deviation over 36 runs (i.e., shuffling and partitioning the data).

\subsection{Results}
\label{sec:results}
The results in Table~\ref{tab:expe_results} show that, as expected, 
the empirical coverage is greater than the nominal value $1 - \alpha$ on average over multiple runs, that is, our \cp procedure constructs statistically valid prediction sets ($\text{Cov} \geq 1 - \alpha$).
As for the size of the prediction set, we see how the stretch and $\lbhat$ increase for higher $\alpha$ and $\tau$, to compensate for the stricter requirements imposed by the user.
In Table~\ref{tab:expe_results} we can also see how \cp can be used to evaluate a model: for a given risk $\alpha$ and coverage ratio $\tau$, the margin is small (e.g., $1.056$ for WBC and UniverSeg at $\alpha=0.1$ and $\tau=0.9$) when the underlying predictor is already satisfying (empirically) the statistical requirements.
\begin{table}
\setlength{\tabcolsep}{1pt}
    \begin{tabular}{cc c ll c llll rr}
         \multicolumn{1}{c}{Model}  
         & \multicolumn{1}{c}{Dataset}  
         & ~~ 
         & \multicolumn{1}{l}{$1 - \alpha$} ~~
         & \multicolumn{1}{c}{$\tau$} 
         & ~~ 
         & \multicolumn{2}{c}{Cov} 
         & \multicolumn{2}{c}{$\phi$}
         & \multicolumn{2}{c}{avg $\lbhat$} \\
         \midrule
PraNet & polyps & & $0.9$ & $0.9$  & & 0.909 & {\scriptsize (0.023)$^{\dagger}$ } & 1.253 & {\scriptsize (0.131)$^{\dagger}$} & 6.083 & {\scriptsize (2.980)$^{\dagger}$} \\
& & & $0.9$ & $0.99$  & & 0.898 & {\scriptsize (0.029) } & 1.780 & {\scriptsize (0.270)} & 17.500 & {\scriptsize (5.593)} \\
& & & $0.9$ & $0.999$  & & 0.904 & {\scriptsize (0.023) } & 2.031 & {\scriptsize (0.309)} & 22.500 & {\scriptsize (5.969)} \\
& & & $0.95$ & $0.9$  & & 0.953 & {\scriptsize (0.020) } & 3.093 & {\scriptsize (0.857)} & 41.389 & {\scriptsize (14.349)} \\
& & & $0.95$ & $0.99$  & & 0.955 & {\scriptsize (0.019) } & 4.144 & {\scriptsize (0.971)} & 58.611 & {\scriptsize (14.946)} \\
& & & $0.95$ & $0.999$  & & 0.962 & {\scriptsize (0.019) } & 4.840 & {\scriptsize (0.959)} & 69.167 & {\scriptsize (14.238)} \\
UniverSeg & WBC & & $0.9$ & $0.9$  & & 0.952 & {\scriptsize (0.044) } & 1.108 & {\scriptsize (0.065)} & 1.056 & {\scriptsize (0.630)} \\
& & & $0.9$ & $0.99$  & & 0.932 & {\scriptsize (0.055) } & 1.646 & {\scriptsize (0.261)} & 6.278 & {\scriptsize (2.514)} \\
& & & $0.9$ & $0.999$  & & 0.928 & {\scriptsize (0.061) } & 1.865 & {\scriptsize (0.284)} & 8.389 & {\scriptsize (2.686)} \\
& & & $0.95$ & $0.9$  & & 0.984 & {\scriptsize (0.033) } & 1.449 & {\scriptsize (0.365)} & 4.361 & {\scriptsize (3.523)} \\
& & & $0.95$ & $0.99$  & & 0.985 & {\scriptsize (0.024) } & 2.448 & {\scriptsize (0.664)} & 13.667 & {\scriptsize (5.826)} \\
& & & $0.95$ & $0.999$  & & 0.985 & {\scriptsize (0.025) } & 2.776 & {\scriptsize (0.735)} & 16.528 & {\scriptsize (6.236)} \\
UniverSeg & OASIS & & $0.9$ & $0.9$  & & 0.969 & {\scriptsize (0.040) } & 1.774 & {\scriptsize (0.061)} & 4.500 & {\scriptsize (0.507)} \\
& & & $0.9$ & $0.99$  & & 0.941 & {\scriptsize (0.049) } & 2.198 & {\scriptsize (0.034)} & 12.333 & {\scriptsize (0.756)} \\
& & & $0.9$ & $0.999$  & & 0.940 & {\scriptsize (0.038) } & 2.247 & {\scriptsize (0.030)} & 15.417 & {\scriptsize (0.996)} \\
& & & $0.95$ & $0.9$  & & 0.997 & {\scriptsize (0.007) } & 1.845 & {\scriptsize (0.040)} & 5.194 & {\scriptsize (0.401)} \\
& & & $0.95$ & $0.99$  & & 0.990 & {\scriptsize (0.014) } & 2.232 & {\scriptsize (0.033)} & 14.222 & {\scriptsize (1.124)} \\
& & & $0.95$ & $0.999$  & & 0.993 & {\scriptsize (0.014) } & 2.259 & {\scriptsize (0.028)} & 17.917 & {\scriptsize (0.937)} 
         \\
    \end{tabular}
    \caption{Experiments with structuring element $B$ = \seCross[0.12] averaged across 36 runs.
    $^{\dagger}$: standard deviation.
    }
    \label{tab:expe_results}
\end{table}

Finally, despite working in the restrictive setting without feedback from the predictor (no sigmoid scores), we show that there are cases where more information does not improve the prediction sets: 
for PraNet on the polyps dataset, we also compute conformal sets using a threshold on the sigmoid, so that as it is lowered, more pixels are included.
The average stretch across several configurations (Tab.~\ref{tab:dil_vs_threshold_pranet}) is considerably larger than with our method.
In this model and dataset configuration, our method performs better than approaches that use more information (e.g., sigmoid scores): not only it produces statistically valid sets, but it also avoids introducing artifacts in the conformal margin.
This phenomenon is visible in Fig.~\ref{fig:thresholding-artifacts}, where the margins are uninformative and seem to be due more to noisy scores in the underlying sigmoid rather than to actual uncertainty scoring (e.g., due to the type of training loss \cite{Azad_2023_loss_segmentation}).
On the other hand, in cases where distant zones are missed by the segmentation mask $\Yhat$, our method cannot recover those areas except by using a large margin.
In this case, using the sigmoid (if available), may be a viable option; 
in this a sense, our approach can be seen as complementary to other \cp methods.

\begin{table}
\setlength{\tabcolsep}{3pt}
    \centering
    \begin{tabular}{cc l l rr}
         \multicolumn{1}{c}{Model}  
         & \multicolumn{1}{c}{Dataset}  
         & \multicolumn{1}{l}{$1 - \alpha$} ~~
         & \multicolumn{1}{l}{$\tau$} 
         & \multicolumn{1}{c}{$\phi_{\text{morphology}}$} 
         & \multicolumn{1}{c}{$\phi_{\text{thresholding}}$}
         \\
         \midrule
        PraNet & polyps & $0.9$  & $0.9$   & 1.253  & \textbf{1.218}   \\
               &        & $0.9$  & $0.99$  & \textbf{1.780}  & 8.950    \\
               &        & $0.9$  & $0.999$ & \textbf{2.031}  & 15.001   \\
               &        & $0.95$ & $0.9$   &  3.093 & \textbf{3.010} \\
               &        & $0.95$ & $0.99$  &  \textbf{4.144} & 15.189 \\
               &        & $0.95$ & $0.999$ &  \textbf{4.840}  & 16.062 \\
    \end{tabular}

    \caption{Comparison with the first rows in Tab.~\ref{tab:expe_results}. Here, conformalization is done both by morphological dilation and thresholding of the sigmoid as in \cite{Angelopoulos_2022_CRC}. 
    }
    \label{tab:dil_vs_threshold_pranet}
\end{table}

\begin{figure}
\centering
    \begin{subfigure}[b]{0.20\textwidth}
        \centering
        \includegraphics[width=0.94\textwidth]{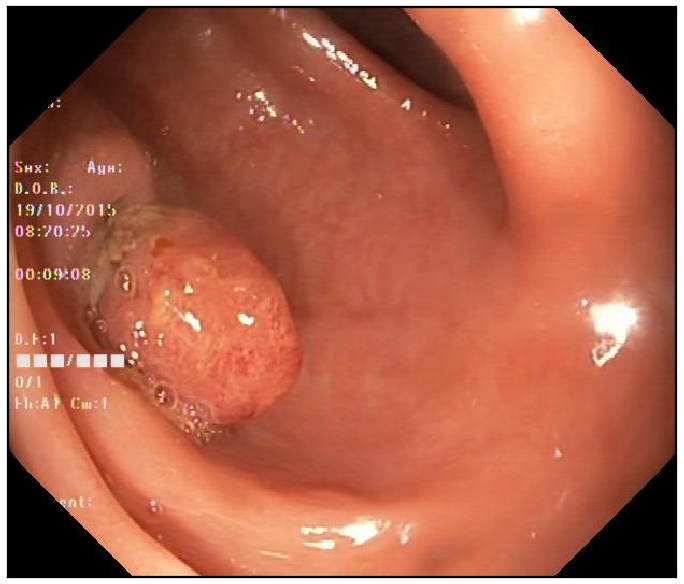}
        \caption{}
    \end{subfigure}
    \begin{subfigure}[b]{0.19\textwidth}
        \centering
        \includegraphics[width=0.94\textwidth]{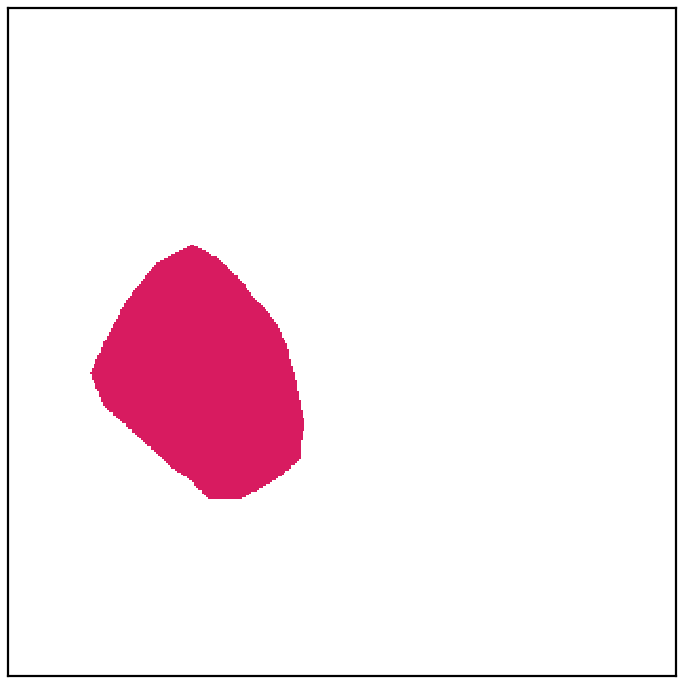}
        \caption{}
    \end{subfigure}
    \begin{subfigure}[b]{0.19\textwidth}
        \centering
        \includegraphics[width=0.94\textwidth]{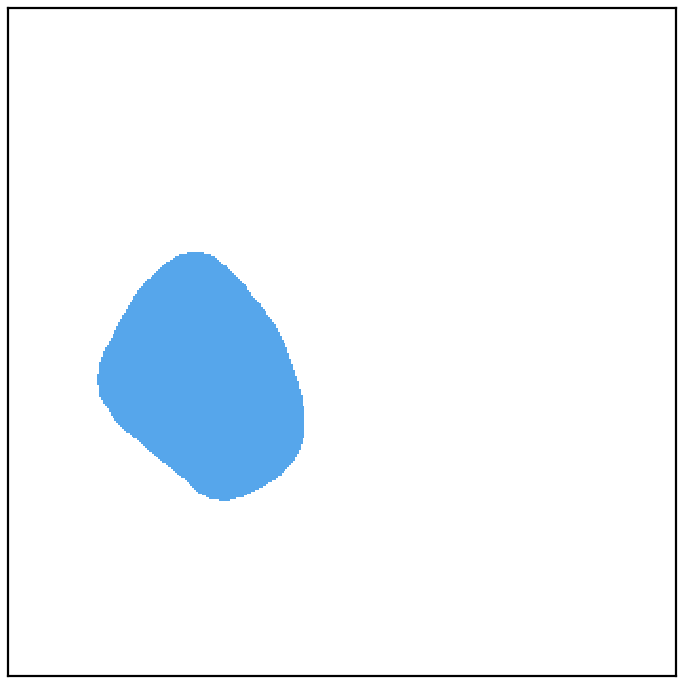}
        \caption{}
    \end{subfigure}
    \begin{subfigure}[b]{0.19\textwidth}
        \centering
        \includegraphics[width=0.94\textwidth]{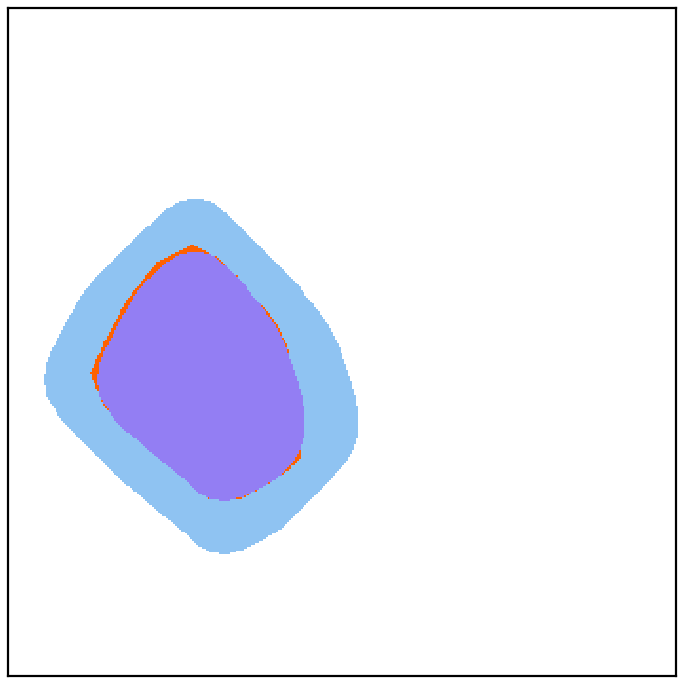}
        \caption{}
    \end{subfigure}
    \begin{subfigure}[b]{0.19\textwidth}
        \centering
        \includegraphics[width=0.94\textwidth]{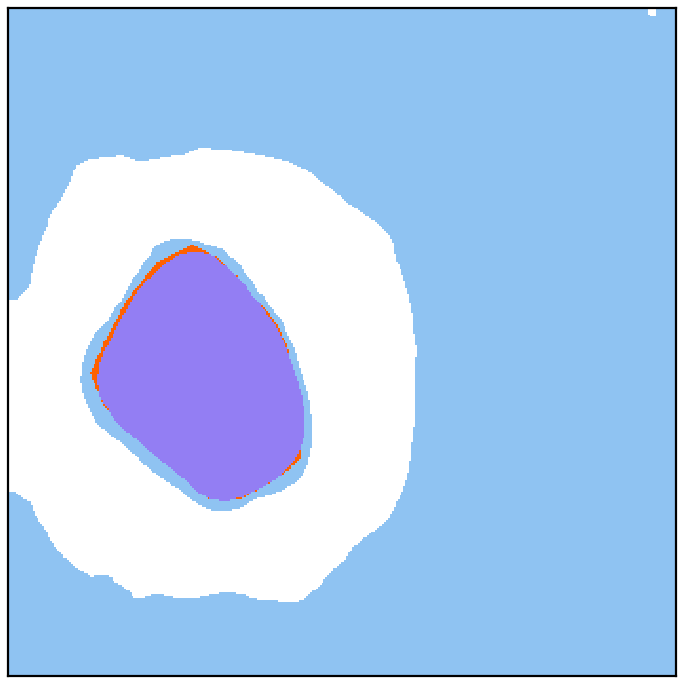}
        \caption{}
    \end{subfigure}
\caption{Example: polyps dataset (see Sec.~\ref{sec:experiments}), prediction with PraNet \cite{Fan_2020_pranet}.
For $\alpha=0.10$ and $\tau = 0.99$: 
\textbf{(a)} input image, 
\textbf{(b)} ground-truth mask, 
\textbf{(c)} predicted mask, 
\textbf{(d)} prediction set via dilation (Eq.~\ref{eq:set-iter-dilations}), 
\textbf{(e)} prediction set via thresholding on sigmoid (as in \cite{Angelopoulos_2022_CRC}).
Pixels in light blue (\marginPixels[0.15]) are the margin: 
in (d) it contains only pixels contiguous to the prediction (c)
while in (e), it does not necessarily do so because of the underlying sigmoid scores (not shown).
As shown in Tab.~\ref{tab:dil_vs_threshold_pranet}, for this model configuration the latter has much larger stretch (Eq.~\ref{eq:stretch}, lower is better).
White pixels represent the background.
}
\label{fig:thresholding-artifacts}
\end{figure}

\section{Conclusion}
In this paper, we proposed an approach that combines a fundamental operation in mathematical morphology, dilation, with Conformal Prediction to construct statistically valid prediction sets for image segmentation.
We achieved this in a restrictive framework with no internal knowledge of the model (e.g., sigmoid scores), where only the prediction masks are required.
We managed to preserve the original shape of the prediction and to remain robust to aberrant scores from such models.

Although we applied our algorithm to several benchmarks in medical imaging, our method can be used with any segmentation method that returns binary masks (e.g., standard thresholding/clustering approaches or more advanced ML models).
This includes \ml models that lack transparency (black boxes) and whose details are hidden from the end users, as well as algorithms that were not originally conceived for uncertainty quantification.

\textbf{Perspectives.}
A promising next step is to extend morphological sets to multiclass and instance segmentation, which are commonly used in the field of medical imaging.
In their basic form, \cp sets do not adapt to the input instance, and the theoretical guarantee holds on average: some images may be “harder” and require a larger margin, and vice versa.
This is an active field of research in \cp \cite{Gibbs_2023_conditional,Blot_2024_automatically_adaptive_CRC}, and the literature on Mathematical Morphology could provide new tools to build adaptive morphological prediction sets using training data.
Finally, we consider combining morphological sets with other approaches (e.g., thresholding) to leverage their respective strengths \cite{Teneggi_2023_k_RCPS}.

\section*{Acknowledgments}
The authors thank all the people and industrial partners involved in the DEEL project. This work has benefited from the support of the DEEL project,\footnote{\url{https://www.deel.ai/}} with fundings from the Agence Nationale de la Recherche, and which is part of the ANITI AI cluster.

\bibliographystyle{acm}
\bibliography{bibliography}

\end{document}